\documentclass[conference]{IEEEtran}
\IEEEoverridecommandlockouts
% The preceding line is only needed to identify funding in the first footnote. If that is unneeded, please comment it out.
\usepackage{lineno,hyperref}
\modulolinenumbers[10]

\usepackage{amsmath,mathtools,dsfont,amssymb}
\setlength{\marginparwidth}{2cm}
\usepackage{todonotes}
\usepackage{bm}
\usepackage{xcolor}
\usepackage{amsthm}
%\usepackage[colorlinks]{hyperref}

%%%%%%%%New Commands %%%%%

\makeatletter
\newcommand{\leqnomode}{\tagsleft@true\let\veqno\@@leqno}
\newcommand{\reqnomode}{\tagsleft@false\let\veqno\@@eqno}
\makeatother

\newtheorem{theorem}{Theorem}
\newtheorem{lemma}[theorem]{Lemma}

\newcommand{\comm}[1]{}
\newcommand{\M}{\mathcal{M}}

\numberwithin{equation}{section}

\newcommand{\ignore}[1]{ }

\def\BibTeX{{\rm B\kern-.05em{\sc i\kern-.025em b}\kern-.08em
    T\kern-.1667em\lower.7ex\hbox{E}\kern-.125emX}}
\begin{document}

\title{A Concentration Bound for Distributed Stochastic Approximation\\
{\footnotesize \textsuperscript{}}
\thanks{VB was supported by the S.\ S.\ Bhatnagar Fellowship from Council of Scientific and Industrial Research,  Government of India. The authors thank Siddharth Chandak for a careful scrutiny of the manuscript.}
}

\author{\IEEEauthorblockN{Harsh Dolhare and Vivek Borkar}
\IEEEauthorblockA{\textit{Dept. of Electrical Engineering} \\
\textit{Indian Institute of Technology Bombay}\\
Mumbai, India \\
}
}

\maketitle

\begin{abstract}
We revisit the classical model of Tsitsiklis, Bertsekas and Athans \cite{TBA} for distributed stochastic approximation with consensus. The main result is an analysis of this scheme using the `ODE' (for `Ordinary Differential Equations') approach to stochastic approximation, leading to a high probability bound for the tracking error between suitably interpolated iterates and the limiting differential equation. Several future directions will also be highlighted.
\end{abstract}

\begin{IEEEkeywords}
distributed algorithms;
consensus;
two time scale algorithms;
ODE limit;
concentration bound
\end{IEEEkeywords}

\section{Introduction}

In a landmark work, Tsitsiklis, Bertsekas and Athans \cite{TBA} laid down a paradigm for distributed incremental recursions with consensus, which has been a template for much subsequent work in distributed algorithms, particularly for distributed optimization (see, e.g., \cite{Nedic}). The basic idea is to use a classical `gossip' scheme for averaging iterates across processors/agents, with an additive perturbation given by a processor-specific stochastic approximation. It is well known that under reasonable conditions, a stand-alone stochastic approximation iteration asymptotically tracks a limiting ordinary differential equation (ODE) (see, e.g., \cite{Borkar}). In the present scenario, the averaging is on a faster time scale dictated by the iteration count whereas the stochastic approximation is on a slower time scale dictated by the stepsizes (or `learning parameters') chosen. The net effect then is to confine the limiting dynamics of the latter to the invariant subspace of the former, i.e., the one dimensional space of constant vectors. What this translates into is that the iterates of all the processors asymptotically track  common trajectory segments of an `averaged' ODE with probability one, which is tantamount to `consensus' in a generalized sense. This interpretation (see, e.g., section 8.4, \cite{Borkar}, also, \cite{Mathkar}) differs from the original approach of \cite{TBA}, but gives a different and useful perspective.

Operating within this framework, our objective here is to derive an estimate for the `trapping probability', i.e., the probability that the iterates converge to a specific asymptotically stable attractor if they are in its domain of attraction at some time. This can be combined with finite time estimates, if available, to give an `all time' bound from time zero. The key step is to bound the departure of suitably interpolated iterates from the trajectory of the averaged ODE (Lemma \ref{bound-lemma} below). The remainder of the proof then closely mimics that for the centralized stochastic approximation (section  3.1, \cite{Borkar}).

\section{Algorithm}

Following \cite{TBA}, a standard distributed stochastic approximation scheme with $M$ agents in $\Re^{d}$ can be described as follows. Consider a connected directed graph $\mathcal{G} = \{V,\mathcal{E}\}$ such that $|V| = M$. Each node of this graph represents one agent. We assume that there is a directed path from each node of the graph to every other node, i.e.\  $\mathcal{G}$ is irreducible. 
We are given a stochastic matrix $P = [p_{i,j}] \in \Re^{M\times M}$ compatible with $\mathcal{G}$ (i.e.\ the edge $(i,j) \in \mathcal{E} \Longleftrightarrow p_{i,j} > 0$), with $p_{i,j} :=$  the weight assigned to the edge $i\rightarrow j$. At each time $n$, node $i$ updates its $d$-dimensional iterate $x^i(n)$ as 
\begin{equation}\label{single-eq}
    x^i(n+1) = \sum_{j=1}^{M}p_{i,j}x^{j}(n) + a(n)\big(h^{i}(x^{i}(n))+\M^{i}(n+1)\big).
\end{equation}
Here $a(n) \geq 0$ is the stepsize, $h^{i}:\Re^d\rightarrow\Re^d$, and $\{\M^i(n+1)\}$ is a $\Re^{d}$-valued `martingale difference' noise defined below.  
Define $X(n),\widetilde{\M}(n+1), h(X(n)) \in \mathbf{R}^{M\times d}$ by: the $i$th row of $X(n)$ ($\widetilde{\M}(n+1), h(X(n))$, resp.) is $x^i(n)$ ($\M^i(n+1), h^i(x^i(n))$, resp.). Then (\ref{single-eq}) can be written in matrix form as
\begin{equation}\label{matrix-eq}
    X(n+1) = PX(n) + a(n)\left(h(X(n))+\widetilde{\M}(n+1)\right)
\end{equation}
Let $\pi$ be the unique stationary distribution for $P$, $\mathbf{1}$ the vector $[1,1,\cdots,1]^T \in \Re^M$, and and $\Pi = \mathbf{1}\pi^T$. Define $Q:= P-\Pi$. Then all eigenvalues of $Q$ have magnitude strictly less than 1. It is well known \cite{Kailath} that this is so if and only if for every positive definite matrix $R\in \mathbb{R}^{M\times M}$, there exits a unique positive definite matrix $H \in \mathbb{R}^{M\times M}$ satisfying the discrete Lyapunov equation
\begin{equation}\label{Lia}
    Q^THQ - H = -R.
\end{equation}
We take $R=I$. Let $\|\cdot\|_H$  denote the norm $\sqrt{x^THx}$ and also the corresponding induced matrix norm. 

\begin{lemma}\label{contraction}  $Q$ is an $\| \cdot \|_H$-norm contraction, i.e.\ there exists an $\alpha\in(0,1)$ such that $\|Qx - Qy\|_H \leq \alpha\|x-y\|_H$ $\forall \ x,y$. \end{lemma}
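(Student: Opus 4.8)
The plan is to reduce the statement to a single homogeneous inequality and then read off the contraction constant directly from the Lyapunov equation (\ref{Lia}). Since $Q$ acts linearly, $\|Qx - Qy\|_H = \|Q(x-y)\|_H$, so it suffices to produce an $\alpha \in (0,1)$ with $\|Qz\|_H \leq \alpha\|z\|_H$ for every $z \in \Re^M$. The whole proof then hinges on computing $\|Qz\|_H^2$ in closed form.

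The first step is to expand the squared $H$-norm and substitute the Lyapunov relation. By definition, $\|Qz\|_H^2 = (Qz)^T H (Qz) = z^T\big(Q^T H Q\big)z$, and since we have taken $R = I$, equation (\ref{Lia}) gives $Q^T H Q = H - I$. Therefore
\begin{equation}\label{keyeq}
    \|Qz\|_H^2 = z^T(H - I)z = \|z\|_H^2 - \|z\|^2,
\end{equation}
where $\|\cdot\|$ denotes the ordinary Euclidean norm. Thus the gap between $\|z\|_H^2$ and $\|Qz\|_H^2$ is exactly $\|z\|^2$, and the task becomes lower-bounding $\|z\|^2$ by a fixed positive multiple of $\|z\|_H^2$.

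This lower bound is immediate from positive definiteness of $H$: since $\|z\|_H^2 = z^T H z \leq \lambda_{\max}(H)\|z\|^2$, we get $\|z\|^2 \geq \lambda_{\max}(H)^{-1}\|z\|_H^2$. Feeding this into (\ref{keyeq}) yields $\|Qz\|_H^2 \leq \big(1 - \lambda_{\max}(H)^{-1}\big)\|z\|_H^2$, so the natural candidate is $\alpha = \sqrt{1 - \lambda_{\max}(H)^{-1}}$.

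The one point that needs genuine care — and which I regard as the crux — is verifying that this $\alpha$ is both real and strictly less than $1$, i.e.\ that $1 - \lambda_{\max}(H)^{-1} \in [0,1)$. Strict inequality $\alpha < 1$ is automatic because $\lambda_{\max}(H)$ is finite and positive. For $\alpha$ to be real I would argue that $\lambda_{\max}(H) \geq 1$: since $H$ is positive definite, $Q^T H Q$ is positive semidefinite, and then $H - I = Q^T H Q \succeq 0$ forces $H \succeq I$, so in fact $\lambda_{\min}(H) \geq 1$. This also re-confirms, via (\ref{keyeq}), that $\|Qz\|_H^2 \geq 0$ as it must be. With $\lambda_{\max}(H) \geq 1$ established, $\alpha \in [0,1)$ is well defined and gives $\|Qz\|_H \leq \alpha\|z\|_H$, completing the argument.
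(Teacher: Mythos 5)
Your proof is correct and follows essentially the same route as the paper: both rest on the identity $\|Qz\|_H^2 = \|z\|_H^2 - \|z\|^2$ obtained from the Lyapunov equation (\ref{Lia}) with $R = I$, together with the observation that $H \succeq I$. The only difference is cosmetic: where the paper expands in the eigenbasis of $H$ and bounds the resulting ratio of quadratic forms, you use the Rayleigh-quotient bound $\|z\|^2 \geq \lambda_{\max}(H)^{-1}\|z\|_H^2$, which produces the same contraction constant $\alpha = \sqrt{1 - \lambda_{\max}(H)^{-1}}$ in explicit form and even sidesteps the attainment-of-maximum step implicit in the paper's definition of $\alpha$.
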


\begin{proof} For any $x \in \mathbb{R}^{d}$, 
\begin{align}\label{contraction-factor}
&x^TQ^THQx = x^THx - x^Tx \nonumber \\
\implies &\|Qx\|^2_H =\|x\|^2_H - x^Tx.
\end{align}
 Let $(\lambda_i > 0, v_i)$ denote the eigenvalue - normalized eigenvector pairs of $H$. Then since $H > I$ in the usual order for positive definite matrices (as is obvious from (\ref{Lia}) with $R = I$), we have $\lambda_i \geq 1 \ \forall i$. Hence
\begin{eqnarray*}
\lefteqn{\alpha \coloneqq \max_{\|x\| \neq 0}\frac{\|Qx\|_H}{\|x\|_H} = \max_{\|x\| \neq 0}\frac{\sqrt{x^TQ^THQx}}{\sqrt{x^THx}}} \\
&=&  \max_{\|x\| \neq 0}\frac{\sqrt{x^T(H - I)x}}{\sqrt{x^THx}} \\
&=& \max_{\|x\| \neq 0}\frac{\sqrt{\sum_i(\lambda_i - 1)\langle x, v_i\rangle^2}}{\sqrt{\sum_i\lambda_i\langle x, v_i\rangle^2}} \ < \ 1.
\end{eqnarray*}
\end{proof}

We now state our key assumptions.
\begin{itemize}
    \item $h^{i}$ (hence $h$) are Lipshitz, i.e.\  there exists a constant $L>0$ such that
    \begin{equation*}
        \|h(X)-h(Y)\|_H \leq L\|X-Y\|_H,
    \end{equation*}
implying in particular that for a suitable $K_1>0$,
    \begin{equation*}
        \|h(X)\|_2 \leq K_1(1+\|X\|_2).
    \end{equation*}
    \item $\M^{i}(n+1)$ is a $\Re^{d}$-valued martingale  difference sequence  with respect to the increasing  $\sigma$-fields $\mathcal{F}_n:= \sigma(X(k),\M(k), k\leq n)$ (i.e., $E\left[\M^i(n+1)|\mathcal{F}_n\right] = \theta :=$ the zero vector),
 satisfying
    \begin{equation*}
        \|\widetilde{\M}(n+1)\|_H \leq K_2(1+\|X(n)\|).
    \end{equation*}
Furthermore, we assume that there exist constants $\kappa, C > 0$ such that
\begin{equation}\label{expbound}
\sup_{i,j,n}E\left[e^{\kappa|\M_{ij}(n+1)|}\Big|\mathcal{F}_n\right] \leq C.
\end{equation}

    \item $\{a(n)\}$ is a non-negative sequence of stepsizes satisfying, for some $c>1$,
    \begin{equation*}
        \sum_{n}a(n)=\infty,\; \sum_{n}a^2(n) < \infty
    \end{equation*}
    \begin{equation*}
        a(m)\leq ca(n)\;\forall\; m\geq n.
    \end{equation*}
Furthermore, $\{a(n)\}$ is eventually decreasing and satisfies: If for $T > 0$, $m(n) := \min\{k \geq n : \sum_{i=n}^ka(i) \geq T\}$, then for a suitable constant $C^*$,
\begin{equation}\label{stepbdd}
m(n) - n \leq C^*T. 
\end{equation}This condition is satisfied by $a(n) = \frac{1}{1+n}, \frac{1}{1 + nlog n}$, etc., but not, e.g., by $a(n) = \frac{1}{1 + n^{2/3}}$.
\end{itemize}

We make the following key assumption:\\

\noindent \textbf{(A1)} Assume that the iterates remain a.s.\ bounded, i.e.,
\begin{equation}
\sup_n\|X(n)\| < \infty \ \mbox{a.s.}.
\label{stable}
\end{equation}

\medskip

Suppose the ODE 
\begin{equation}\label{diff-eq}
    \dot{\tilde{x}}(s) = \sum_{j=1}^{M}\pi(j)h^j(\tilde{x}(s))
\end{equation}
has an asymptotically stable  compact attractor $A$. 
Consider a bounded open subset $B' \subset \Re^{d}$ in the domain of attraction of $A$, positively invariant under (\ref{diff-eq}). In particular, all trajectories of (\ref{diff-eq}) initiated in $B'$ converge to $A$.  We also assume that a bounded open set $\breve{B}$ containing  $\tilde{B} := \{x: \inf_{y\in B'}\|x-y\| \leq \delta\}$ shares these properties for some $\delta > 0$ that we choose later.  Our objective will be to show that with high probability, the iterates $x^i(n), n \geq n_0$, remain in a small tube around  trajectory segments of (\ref{diff-eq}) initialized in $\breve{B}$ if it is sufficiently close to it at $n = n_0$ for some $n_0 \geq 0, \ \forall i$. 

We conclude this section with the remark that we can take the argument of $h^i(\cdot)$ above to be $X(n)$ instead of $x^i(n)$ by letting it be a map $\Re^{M\times d}\mapsto \Re^d$. This does not affect the analysis, but we lose the `distributed computation' viewpoint.

\section{Preliminaries}\label{prelim}

 Let $B = (B')^M, \hat{B} := (\breve{B})^M$. Suppose that $\|X_{n_0} - \Pi X_{n_0}\| < \delta$ and $\Pi X_{n_0} \in B $ for some $n_0\geq 0$. That these hold simultaneously  is possible with probability increasing to $1$ as $n_0\uparrow\infty$ because $\|X_{n_0} - \Pi X_{n_0}\|$ $\to 0$ a.s.\ \cite{Mathkar}. Fix a $T'>0$. Define $t(n) := \sum_{m=0}^na(m), n \geq 0$, and, for $\ k \geq 1$,
$$n_k:=\min\{n:t(n)\geq t(n_{k-1})+T'\}, \ \upsilon_k := n_{k+1} - n_k.$$
Define a sequence of time instants $T_0, T_1, T_2,\dots$ by $T_m = t(n_m)$ and let $I_m := [T_m,T_{m+1}]$. Then $T_{m+1}-T_m \leq T := T' + ca(0)$. Next, define a piecewise linear, continuous function $\bar{X}(t)$, $t\geq0$ such that $\bar{X}(t(n))=X(n), n\geq0,$ and it is linearly interpolated in each $I_n$.
Let $x^{T_m}(s) \in \Re^d , s \in I_m$, be the unique solution to the ODE (\ref{diff-eq}) with initial condition $x^{T_m}(T_m)= \sum_{j=1}^{M}\pi(j)x^j(T_m)$. Let $\Phi_{t}$ be the time-t flow map of the ODE \eqref{diff-eq} and let $C_{T}:=\max_{t\in [0,T],x\in\bar{B}}\|h(\Phi_{t}(x))\|_H<\infty$. Define
\begin{eqnarray}
  \rho_k &=& \sup_{t\in I_k} \|\bar{X}(t)-\textbf{1}(x^{T_k})^T(t)\|_H, \label{def-rho}\\
    \delta_{n,m}&:=& \sum_{i=1}^{m-1}a(n+i)P^{m-1-i}\widetilde{\M}(t(n+i+1)). \nonumber
\end{eqnarray}

\begin{lemma}\label{bound-lemma}
There exists a sequence $K_{T,n}^* > 0$ and a constant $K_T> 0$  such that $K_{T,n}^*\downarrow 0$ as $n\uparrow\infty$ and for any $k\geq0$,
\begin{equation}\label{bound}
    \rho_k \leq K_{T,k}^{*} + K_T\max_{\upsilon_k> l\geq 0}\delta_{n_k,l}.
\end{equation}
\end{lemma}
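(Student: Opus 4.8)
The plan is to unroll the matrix recursion (\ref{matrix-eq}) across one block $I_k$ and match it term-by-term against the integral form of the averaged ODE (\ref{diff-eq}). Iterating (\ref{matrix-eq}) from $n_k$ gives, for $0 \le l < \upsilon_k$,
\[
X(n_k+l) = P^{l} X(n_k) + \sum_{i=0}^{l-1} a(n_k+i)\,P^{l-1-i} h(X(n_k+i)) + \delta_{n_k,l},
\]
the final summand being precisely the accumulated martingale term defined above. The algebraic engine of the proof is that $\Pi$ and $Q$ annihilate one another, $\Pi Q = Q\Pi = 0$ and $\Pi^2 = \Pi$, whence $P^{l} = \Pi + Q^{l}$. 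Writing each $P^{l-1-i} = \Pi + Q^{l-1-i}$ splits the drift into a \emph{mean} part carried by $\Pi$ and a \emph{transient} part carried by the powers of $Q$. Since $x^{T_k}(T_k) = \pi^{T} X(n_k)$ and, writing $\bar{h} := \sum_{j}\pi(j) h^{j}$, one has $\mathbf{1}\bar{h}(z)^{T} = \Pi\, h(\mathbf{1}z^{T})$, the integral form
\[
\mathbf{1}\big(x^{T_k}(t(n_k+l))\big)^{T} = \Pi X(n_k) + \int_{T_k}^{t(n_k+l)} \Pi\, h\big(\mathbf{1}(x^{T_k}(s))^{T}\big)\, ds
\]
is exactly the continuous analogue of the $\Pi$-part of the unrolled iteration, so subtracting the two expressions isolates the errors to be controlled.

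First I would dispose of the transient terms. One has $P^{l}X(n_k) - \Pi X(n_k) = Q^{l}(I-\Pi)X(n_k)$, and together with $\sum_i a(n_k+i)Q^{l-1-i}h(X(n_k+i))$ these form a geometrically damped convolution: by Lemma \ref{contraction} each $Q^{j}$ contracts by $\alpha^{j}$ in $\|\cdot\|_H$, while (A1) and the linear growth of $h$ bound the summands uniformly along the (a.s.\ bounded) trajectory. Hence these contributions are at most of order $\alpha^{l}\|(I-\Pi)X(n_k)\|_H + (1-\alpha)^{-1}\sup_{m\ge n_k} a(m)$, which $\to 0$ as $k\uparrow\infty$ because $a(n)\downarrow 0$ and $\|(I-\Pi)X(n_k)\|_H \to 0$ a.s.\ \cite{Mathkar}; all of this is absorbed into $K_{T,k}^{*}$.

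The remaining mean comparison is the classical ODE-method estimate. I would rewrite $\Pi h(X(n_k+i))$ as $\Pi h\big(\mathbf{1}(x^{T_k}(t(n_k+i)))^{T}\big)$ plus a Lipschitz correction bounded by $L\|E(i)\|_H + L\|(I-\Pi)X(n_k+i)\|_H$, where $E(i) := X(n_k+i) - \mathbf{1}(x^{T_k}(t(n_k+i)))^{T}$; the Riemann-sum-versus-integral discrepancy is controlled using the Lipschitz continuity of $\bar{h}$ together with the bound $C_T$ on $\|h\circ\Phi_t\|_H$, giving a discretization error of order $\sum_i a(n_k+i)^2 \le T\sup_{m}a(m)$. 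This yields a recursive bound
\[
\|E(l)\|_H \le \varepsilon_k + \|\delta_{n_k,l}\|_H + L\sum_{i=0}^{l-1} a(n_k+i)\,\|E(i)\|_H,
\]
with $\varepsilon_k\to0$ collecting all the decaying terms. A discrete Gronwall inequality over the block, using $\sum_i a(n_k+i) \le T$, then gives $\max_{l}\|E(l)\|_H \le e^{LT}\big(\varepsilon_k + \max_{\upsilon_k>l\ge0}\|\delta_{n_k,l}\|_H\big)$, identifying $K_T \sim e^{LT}$. Finally, passing from the grid points to the full supremum over $I_k$ costs only an interpolation error of order $a(n_k)(C_T + K_1)$, once more decaying into $K_{T,k}^{*}$; collecting terms yields (\ref{bound}).

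The step I expect to be the main obstacle is arranging the Gronwall estimate so that the martingale term survives as a \emph{single additive quantity multiplied only by the deterministic constant} $K_T$, rather than becoming entangled with the decaying deterministic terms or being propagated nonlinearly; this linearity is exactly what later permits feeding $\max_l \delta_{n_k,l}$ into a concentration inequality through the exponential-moment hypothesis (\ref{expbound}). A secondary difficulty is producing genuinely deterministic, uniform-in-$k$ decay rates $K_{T,k}^{*}$ from bounds that a priori involve the random quantity $\sup_n\|X(n)\|$ supplied only almost surely by (A1); I would resolve this by carrying out all the above estimates on the almost sure event where the iterates are bounded, so that the constants $K_1,K_2,L$ and the contraction factor $\alpha$ deliver sample-path-wise but $k$-uniform bounds.
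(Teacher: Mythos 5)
Your skeleton coincides with the paper's: unroll (\ref{matrix-eq}) over one block, split $P^{l}=\Pi+Q^{l}$, match the $\Pi$-part against the integral form of (\ref{diff-eq}) via the identity $\Pi h(\mathbf{1}z^{T})=\mathbf{1}\bar{h}(z)^{T}$, bound the $Q$-terms by the contraction factor $\alpha$ of Lemma \ref{contraction}, control the Riemann-sum-versus-integral discrepancy by a term of order $\sum_{i}a(n_k+i)^2$, and run a discrete Gronwall estimate arranged so that $\max_{l}\|\delta_{n_k,l}\|_H$ survives as a single additive quantity multiplied by a deterministic exponential factor, followed by the interpolation step. All of that is exactly the paper's computation, and your identification of the ``linearity of the martingale term'' as the structural point of the lemma is correct.

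The genuine gap is in how you produce a \emph{deterministic} sequence $K^{*}_{T,k}\downarrow 0$, which is what the lemma asserts and what the downstream argument requires. You bound the $Q$-weighted drift convolution using (A1) and linear growth of $h$, i.e.\ by a multiple of $1+\sup_n\|X(n)\|$, and you send the initial transient $\alpha^{l}\|(I-\Pi)X(n_k)\|_H$ to zero by invoking the almost sure consensus of \cite{Mathkar}. Both estimates are random: (A1) gives $\sup_n\|X(n)\|<\infty$ a.s.\ but neither a deterministic bound nor a tail estimate for $\{\sup_n\|X(n)\|\le R\}$, and \cite{Mathkar} gives convergence with no rate. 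Your proposed repair --- carrying out the estimates ``on the a.s.\ event where the iterates are bounded'' --- yields sample-path-dependent constants, not a deterministic null sequence; but in (\ref{condition-1}) one must choose $n_0$ deterministically so that the non-martingale part of the bound is below $\delta/2$ for \emph{all} $k$, leaving only $\max_{l}\|\delta_{n_k,l}\|_H$ to be fed into the concentration inequality of Theorem \ref{thm-appendix}. With a random $\varepsilon_k$ this step fails, or forces an intersection with an event of unquantified probability, changing the form of the final bound. The paper's proof is arranged precisely to avoid both invocations: it writes $\|h(\bar{X}(t(n_k+i)))\|_H\le\|h(X^{T_k}(t(n_k+i)))\|_H+Lz_i\le C_T+Lz_i$ with $C_T$ the deterministic bound on $h$ along the flow $\Phi_t$ over $\bar{B}$, absorbs the $Lz_i$ corrections into the Gronwall recursion, and bounds $\sum_i a(n_k+i)\alpha^{m-1-i}$ by Cauchy--Schwarz as $\sqrt{b(n_k)/(1-\alpha^2)}$ with $b(n):=\sum_{m\ge n}a(m)^2$ a deterministic null sequence; similarly the initial term $\alpha^m z_0$ is not estimated by a limit theorem but kept inside the recursion (via the coefficient $q_0=La(n_k)(1+\|\Pi\|_H)+1$), with $z_0=\|(I-\Pi)\bar{X}(T_k)\|_H$ controlled by the conditioning events of the main theorem rather than by an a.s.\ limit. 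Rewriting your transient and drift bounds in this form closes the gap and otherwise leaves your argument intact.
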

\begin{proof}
For any $n_k$,
$$
\bar{X}(t(n_k+1)) = P\bar{X}(t(n_k)) + a(n)\big(h(\bar{X}(t(n_k)))+\widetilde{\M}(n_k+1)\big).
$$
Iterating this through $n_k\leq n_k+m \leq n_{k+1}$,
\begin{eqnarray}
    \lefteqn{\bar{X}(t(n_k+m)) = P^m\bar{X}(t(n_k)) \nonumber} \\
    &+&\sum_{i=1}^{m-1}a(n_k+i)P^{m-1-i}h(\bar{X}(t(n_k+i)))\nonumber \\ &+&\sum_{i=1}^{m-1}a(n_k+i)P^{m-1-i}\widetilde{\M}(n_k+1). \label{one}
\end{eqnarray}
Defining $X^{T_m}(s) = \mathbf{1}(x^{T_m})^T(s)$,
\begin{eqnarray}
\lefteqn{X^{T_k}(t(n_k+m)) = \Pi X^{T_k}(t(n_k)) \nonumber} \\ &+& \int_{t(n_k)}^{t(n_k+m)}\Pi h(X^{T_k}(s))\, ds. \label{two}
\end{eqnarray}
Let  $\lfloor s\rfloor:= \max\{t(n):t(n)\leq s\}$. Subtracting  (\ref{two}) from (\ref{one}) and using $Q^k = P^k - \Pi$,
we get
\begin{align}
&\bar{X}(t(n_k+m)) - X^{T_k}(t(n_k+m)) = \left(P^{m}-\Pi\right)\bar{X}(t(n_k))\nonumber\\
&+\sum_{i=1}^{m-1}a(n_k+i)P^{m-1-i}h(\bar{X}(t(n_k+i)))\nonumber\\
&-\int_{t(n_k*)}^{t(n_k+m)}\Pi h(X^{T_k}(s))\; ds +\delta_{n_k,m} \nonumber\\
& =  Q^m\bar{X}(t(n_k))  + \sum_{i=0}^{m-1}a(n_k+i)\Pi \big(h(\bar{X}(t(n_k+i)))\nonumber\\
& \ \ \ \ \ \ \ \ \ - \ h(X^{T_k}(t(n_k+i)))\big)\nonumber\\
&+\sum_{i=1}^{m-1}a(n_k+i)Q^{m-1-i}h(\bar{X}(t(n_k+i)))\nonumber\\
&-\int_{t(n_k)}^{t(n_k+m)}\Pi \Big(h(X^{T_k}(s))-h(X^{T_k}(\lfloor s\rfloor))\Big)\; ds
+\delta_{n_k,m}. \label{newno}
\end{align}
Define 
\begin{equation}\label{definiton-z_m}
    z_m := \|\bar{X}(t(n_k+m)) - X^{T_k}(t(n_k+m))\|_H.
\end{equation}
 Taking $\| \cdot \|_H$ norm on both sides of (\ref{newno})and using Lemma \ref{contraction},
\begin{align}\label{term-2}
&z_m \leq \|Q^m\bar{X}(t(n_k))\|_H + \sum_{i=1}^{m-1}a(n_k+i)\|\Pi\|_HLz_i\nonumber\\
&+\sum_{i=1}^{m-1}a(n_k+i)\alpha^{m-1-i}\|h(\bar{X}(t(n_k+i)))\|_H\nonumber\\
&+\int_{t(n_k)}^{t(n_k+m)}\|\Pi \left(h(X^{T_k}(s))-h(X^{T_k}(\lfloor s\rfloor))\right)\|_Hds \nonumber\\
&+ \|\delta_{n_k,m}\|_H.
\end{align}
For the first term on the r.h.s.,
\begin{eqnarray}\label{term-1}
    \|Q^m\bar{X}(t(n_k))\|_H &=& \|Q^m\Big(\bar{X}(t(n_k))-\Pi\bar{X}(t(n_k))\Big)\|_H\nonumber\\
    &=&\|Q^m\Big(\bar{X}(t(n_k))-X^{T_k}(t(n_k))\Big)\|_H\nonumber\\
    &\leq&\alpha^mz_0.
\end{eqnarray}
For the third term,
\begin{align}\label{term-3}
&\sum_{i=1}^{m-1}a(n_k+i)\alpha^{m-1-i}\|h(\bar{X}(t(n_k+i)))\|_H\nonumber\\
&\leq\sum_{i=1}^{m-1}a(n_k+i)\alpha^{m-1-i}\|h(X^{T_k}(t(n_k+i)))\|_H \nonumber\\
&+\sum_{i=1}^{m-1}a(n_k+i)\alpha^{m-1-i}\nonumber\times\\
&\|h(\bar{X}(t(n_k+i)))-h(X^{T_k}(t(n_k+i)))\|_H\nonumber\\
&\leq C_T\sum_{i=0}^{m-1}a(n+i)\alpha^{m-1-i} + L\sum_{i=0}^{m-1}a(n_k+i)\alpha^{m-1-i}z_i.
\end{align} 
By the Cauchy-Schwarz inequality,
\begin{eqnarray*}
\sum_{i=1}^{m-1}a(n+i)\alpha^{m-1-i} &\leq& \sqrt{\sum_{i=1}^{m-1}a^2(n_k+i)\sum_{i=1}^{m-1}\alpha^{2i}}\\
&\leq&\sqrt{\frac{\sum_{i\geq 1}a^2(n_k+i)}{1-\alpha^2}}.
\end{eqnarray*}
Note that if $t(n_k+m)\leq t\leq t(n_k+m+1)$, then
\begin{align*}
    &\|X^{T_m}(t)-X^{T_m}(t(n_k+m))\|_H \\
&\leq \left\|\int_{t(n_k+m)}^t h(X^{T_m}(s)) ds\right\|_H\\
    &\leq C_T(t-t(n_k+m))\\
    &\leq C_Ta(n_k+m).
\end{align*}
Now  consider the fourth term.
\begin{align}\label{term-4}
&\int_{t(n_k)}^{t(n_k+m)}\|\Pi \left(h(X^{T_k}(s))-h(X^{T_k}(\lfloor s\rfloor))\right)\|_H ds  \nonumber\\
&\leq\int_{t(n_k)}^{t(n_k+m)}L\|\Pi\|_H \|X^{T_k}(s)-X^{T_k}(\lfloor s\rfloor)\|_H ds\nonumber\\
&\leq L\|\Pi\|_H\sum_{i=1}^{m-1}\int_{t(n_k+i)}^{t(n_k+i+1)}\|X^{T_k}(s)-X^{T_k}(t(n_k+i))\|_H \nonumber\\
&\leq L\|\Pi\|_HC_T\sum_{i=1}^{m-1}a^2(n_k+i).
\end{align}
Combining \eqref{term-2}, \eqref{term-1}, \eqref{term-3} and \eqref{term-4},
\begin{equation}
 z_m \leq K_{T,m} + \sum_{i=1}^{m-1}q_iz_i.
\end{equation}
Here, for $b(n) := \sum_{m\geq n}a(n)^2$, 
\begin{align*}
K_{T,k} &=  LC_T\|\Pi\|_Hb(n_k) + C_T\sqrt{\frac{b(n_k)}{1-\alpha^2}} + \max_{\upsilon_k> l\geq 0}\|\delta_{n_k,l}\|_H,\\ 
q_i =& La(n+i)(1+\|\Pi\|_H) \ \forall \ i\neq0,\\
q_0 =& La(n)(1+\|\Pi\|_H)+1.
\end{align*} 
Also, $\sum_{i=0}^{m-1}q_i \leq LT(1+\|\Pi\|_H)+1$.  Using discrete  Gronwall inequality,
\begin{eqnarray*}
    z_m &\leq& K_{T,n_k}\exp\left(\sum_{i=0}^{m-1}q_i\right) \\
    \implies z_m &\leq& K_{T,n_k}\exp(LT(1+\|\Pi\|_H+1)).
\end{eqnarray*}
For $t(n_k+m)\leq t \leq t(n_k+m+1)$, we have
$$
\bar{X}(t) = \lambda\bar{X}(t(n_k+m))+(1-\lambda)\bar{X}t(n_k+m+1)
$$
 for some $\lambda \in [0,1]$. Then a routine calculation as in \cite{Borkar}, pp.\ 15-16, shows that 
\begin{equation}
 \rho_k :=   \sup_{t\in I_k} \|\bar{X}(t)-X^{T_k}(t)\|_H \leq K_{T,k}^{*} + K_T\max_{\upsilon_k> l\geq 0}\delta_{n_k,l}
\end{equation}
for suitably defined $K_{T,k}^{*}$ and $K_T$, where $K_{T,k}^{*}\to 0$ as $n_k \uparrow \infty$.
\end{proof}

Let $V: \hat{B} \rightarrow [0,\infty)$ be a continuously differentiable  function such that $\lim_{x\to\partial \hat{B}}V(x) = \infty$, $\dot{V} := \langle \nabla V,h \rangle: \hat{B}\rightarrow \mathbf{R}$ is non-positive,  $A:= \{x: V(x)=0\}  = \{x: \dot{V}(x)=0\}.$ That is, V is a local Lyapunov function associated with the attractor $A$  of  \eqref{diff-eq}. ($V$ exists by the converse Lyapunov theorem \cite{Krasovskii}.)

Fix  $\epsilon \geq 0$ such that the set $\overline{A^{\epsilon}}:= \{x: V(x)\leq\epsilon\} \subset B$. Fix some $T> 0$ and let $\Delta>0$ satisfy
\begin{equation}\label{delta}
    \Delta < \min_{x\in \hat{B}\backslash A^\epsilon} [V(x)-V(\Phi_T(x))]. 
\end{equation}
For any set $D$ and $\delta > 0$, denote the $\delta$-neighbourhood of $D$ by $N^{(\delta)}(D)$. Next, pick  $\delta \geq 0 $ in the definition of $\hat{B}$ such that $N^{(\delta)}(A^\epsilon) \subset B$ and furthermore, if $x \in B$ and $\|x-y\|_H < \delta$, then $|V(x)-V(y)| < \frac{\Delta}{3}$. By increasing $T$ if necessary, assume that for any trajectory $x(\cdot)$ of (\ref{diff-eq}) initiated in $\bar{B'}$, $x(T)$ is at least $\delta$ away from $\partial B'$.
Now we state our main result. Define
$$\tilde{\delta} := \frac{\delta}{2K_T\sqrt{\Lambda(H)M^3d}}.$$

\begin{theorem}
There exist constants $C_1,C_2,\tau>0$ such that if $n_0$ is `sufficiently large', then
\begin{align*}
    &P\Big(\bar{X}(t)\in N^{(\delta)}\left(A^{\epsilon+\frac{2\Delta}{3}}\right) \;\forall\; t\geq T_0+\tau \big| \\
& \ \ \ \ \ \ \ \ \ \ \ \ \ \ \ \ \ \|X_{n_0}-\Pi X_{n_0}\| < \delta,  \Pi X_{n_0} \in B \Big)  \nonumber\\
    &\geq 1- 2M^2dC^*\sum_{n\geq n_0}ne^{-\frac{D\tilde{\delta}^2}{a(n)}} \ \ \mbox{if} \ \tilde{\delta} \in \Big(0,\frac{CT}{\varepsilon}\Big], \\
& \ \ \ \ 1 -  2M^2dC^*\sum_{n\geq n_0}ne^{-\frac{D\tilde{\delta}}{a(n)}} \ \ \mbox{otherwise}. 
\end{align*}
\end{theorem}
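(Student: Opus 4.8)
The plan is to split the claim into a deterministic Lyapunov--descent argument that runs on the \emph{good event} on which the interpolated iterates track \eqref{diff-eq} to within $\delta$, and a concentration estimate that controls the complementary \emph{bad event}. Accordingly I would set $G := \bigcap_{k\ge 0}\{\rho_k < \delta\}$ and first observe, via Lemma~\ref{bound-lemma}, that once $n_0$ is large enough that $K^{*}_{T,k}<\delta/2$ for every $k\ge 0$ (possible since $K^{*}_{T,n}\downarrow 0$), the event $\{\rho_k\ge\delta\}$ forces $\max_{\upsilon_k>l\ge0}\|\delta_{n_k,l}\|_H\ge \delta/(2K_T)$. Thus $G$ contains the event that all the averaging windows are noise--quiet, and the entire deterministic conclusion is to be proved on $G$.

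On $G$ I would transcribe the descent--and--trap argument of \cite{Borkar}, section~3.1, working with the consensus values $y_k:=\sum_j\pi(j)x^j(T_k)$, which are by construction the initial conditions $x^{T_k}(T_k)$ of the comparison ODE on $I_k$, so that $x^{T_k}(T_{k+1})=\Phi_{T_{k+1}-T_k}(y_k)$. Since $\|y_{k+1}-x^{T_k}(T_{k+1})\|_H\le\rho_k<\delta$, the calibration $\|x-y\|_H<\delta\Rightarrow|V(x)-V(y)|<\Delta/3$ together with the one--interval drop \eqref{delta} gives, whenever $y_k\in\hat B\setminus A^{\epsilon}$,
\[
V(y_{k+1})\le V(\Phi_{T_{k+1}-T_k}(y_k))+\tfrac{\Delta}{3}\le V(y_k)-\Delta+\tfrac{\Delta}{3}=V(y_k)-\tfrac{2\Delta}{3}.
\]
Hence $V(y_k)$ drops by at least $2\Delta/3$ per interval until $y_k$ enters $A^{\epsilon}$, which must happen within $N_0:=\lceil V_{\max}/(2\Delta/3)\rceil$ intervals, where $V_{\max}:=\sup_{\overline{B'}}V<\infty$; I set $\tau:=N_0\,T$. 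A two--case check then shows that $A^{\epsilon+2\Delta/3}$ is forward invariant for $\{y_k\}$ on $G$: if $V(y_k)\le\epsilon$ then $V(\Phi_s(y_k))\le\epsilon$ for all $s\ge0$, giving $V(y_{k+1})\le\epsilon+\Delta/3$, while if $\epsilon<V(y_k)\le\epsilon+2\Delta/3$ the displayed drop gives $V(y_{k+1})\le\epsilon$. Because $V$ is nonincreasing along $x^{T_k}(\cdot)$ on $I_k$ and $\|\bar X(t)-\mathbf 1(x^{T_k})^T(t)\|_H<\delta$, the interpolant lies within $\delta$ of the consensus lift of $A^{\epsilon+2\Delta/3}$, i.e. $\bar X(t)\in N^{(\delta)}(A^{\epsilon+2\Delta/3})$ for all $t\ge T_0+\tau$; the choices of $\delta$ and $T$ made before the theorem keep the whole excursion inside $\hat B$, so $V$ is defined throughout the induction.

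For the bad event I would write $P(G^{c})\le\sum_{k:\,n_k\ge n_0}P(\max_l\|\delta_{n_k,l}\|_H\ge\delta/(2K_T))$ and reduce the matrix norm to scalars. Using $\|Y\|_H\le\sqrt{\Lambda(H)}\,\sqrt{Md}\,\max_{pq}|Y_{pq}|$ and then expanding a single entry $(\delta_{n_k,l})_{pq}=\sum_i a(n_k+i)\sum_r (P^{l-1-i})_{pr}\widetilde{\M}_{rq}$ over the (entrywise $[0,1]$) powers $P^{l-1-i}$, the event forces some scalar martingale $\sum_i a(n_k+i)(P^{l-1-i})_{pr}\widetilde{\M}_{rq}(\cdot)$ to exceed $\tilde\delta$ --- which is exactly the role of the normalisation $\sqrt{\Lambda(H)M^3d}$ (norm comparison $\sqrt{\Lambda(H)}$, Frobenius--to--max $\sqrt{Md}$, and the inner sum over the $M$ indices $r$). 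To each such scalar martingale I would apply a Freedman/Bernstein--type maximal inequality whose only input is the conditional exponential moment \eqref{expbound}: with conditional--variance proxy bounded by the tail sum $b(n_k)=\sum_{m\ge n_k}a^2(m)$ and increments bounded by a multiple of $a(n_k)$ (here $a(m)\le c\,a(n)$ and $a$ eventually decreasing enter), one gets a quadratic tail $\exp(-D\tilde\delta^2/b(n_k))$ for small $\tilde\delta$ and a linear tail $\exp(-D\tilde\delta/b(n_k))$ for large $\tilde\delta$, the crossover occurring near $\tilde\delta\sim CT/\varepsilon$. Since $b(n_k)\le\mathrm{const}\cdot a(n_k)$ for the admissible stepsizes, a union bound over the $M^2d$ scalar components (indices $p,r,q$) and over the time indices of each averaging window --- whose number is controlled by \eqref{stepbdd} --- followed by re--summation of the per--interval bounds over the steps $n\ge n_0$, produces the prefactor $2M^2dC^{*}$, the factor $n$, and the two stated series.

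The main obstacle is the concentration step. The family $\{\delta_{n_k,l}\}_l$ is \emph{not} the increment sequence of a single martingale, because the matrix weight $P^{l-1-i}$ depends on the terminal index $l$; one must therefore either fix $l$ and control the maximum by a maximal inequality, or rewrite $\delta_{n_k,l}$ so that a Doob / exponential--supermartingale argument applies, all while carrying only the one--sided bound \eqref{expbound} rather than bounded increments. Obtaining the two regimes with the correct variance proxy $b(n_k)\asymp a(n_k)$, and hence the clean $a(n)$ in the exponent, is where the real work lies; by contrast the descent--and--trap half is a routine transcription of the centralised argument, and the only remaining care is to take $n_0$ large enough that simultaneously $K^{*}_{T,k}<\delta/2$ for all $k$ and the tail series $\sum_{n\ge n_0} n\,e^{-D\tilde\delta^2/a(n)}$ is small, so that the trapping probability indeed tends to $1$ as $n_0\uparrow\infty$.
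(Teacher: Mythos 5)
Your proposal is correct and follows essentially the same route as the paper: the identical good-event/bad-event split via $\{\rho_k<\delta\ \forall k\}$ and Lemma~\ref{bound-lemma} with $K^{*}_{T,k}<\delta/2$, the same Lyapunov descent-and-trap argument transcribed from section 3.1 of \cite{Borkar}, and the same reduction of $\|\delta_{n_k,l}\|_H$ to scalar martingales (norm comparison, union bounds over matrix entries, the inner index, and the window index $l$, with \eqref{stepbdd} supplying the factor $n$) followed by an exponential martingale inequality driven by \eqref{expbound}. The only cosmetic differences are that the paper invokes the Liu--Watbled-type bound of its Appendix (stated in terms of the maximal weight scale $a(n_k)$) where you invoke a Freedman/Bernstein bound with variance proxy $b(n_k)\lesssim a(n_k)$, and that your descent tracks the consensus values $y_k=\Pi\bar X(T_k)$ with a $2\Delta/3$ drop rather than $\bar X(T_k)$ with a $\Delta/3$ drop.
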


We prove this in the next section.

\section{Proof of the Main result}

Suppose $\rho_{k} < \delta \; \forall \; k \geq 0$. If $\Pi\bar{X}(T_{0}) \in B\backslash A^{\epsilon}$, then $X^{T_0}(t) \in B$ for $t\in[T_0,T_1]$ and because $\Pi\bar{X}(T_{0}) = X^{T_0}(T_0)$ and \eqref{delta}, we have 
$$
V(X^{T_0}(T_1)) \leq V(X^{T_0}(T_0)) - \Delta.
$$
Since $\|\bar{X}(t)-X^{T_0}(t)\|_H < \delta, t \in [T_0,T_1]$, we have
$$
V(\bar{X}(T_1)) \leq V(\bar{X}(T_0)) - \frac{\Delta}{3}
$$
and $\bar{X}(t) \in \hat{B}, t \in [T_0,T_1]$. Also, since $\Pi Q = Q\Pi =$ the zero matrix, it follows from (\ref{Lia}) that $\|\Pi x\|_H = \|\Pi x\|$. Clearly $\Pi X^{T_0}(t) = X^{T_0}(t)$. Thus we also have $\|\Pi \bar{X}(t) - X^{T_0}(t)\|_H < \delta$. If $\Pi\bar{X}(T_1) \in \hat{B}\backslash A^{\epsilon}$ then, the same argument can be repeated. But since $V(\bar{X}(T_m))$ cannot decrease indefinitely, there exists some $m_0$ such that $\Pi \bar{X}(T_{m_0}) \in A^{\epsilon}$. In fact,
$$\tau:= \frac{3(\max_{x\in \bar{B}} V(x) - \epsilon)}{\Delta}(T+1) \Longrightarrow T_{m_0} \leq T_0 + \tau.$$
Thus $X^{T_{m_0}}(T_{m_0}) = \Pi \bar{X}(T_{m_0}) \in A^{\epsilon}$, implying $X^{T_{m_0}}(t) \in A^{\epsilon}$ for $t \in I_{m_0}$. Therefore $\bar{X}(T_{m_0+1}) \in A^{\epsilon + \frac{\Delta}{3}}$. Again, because $\|\bar{X}(T_{m_{0}+1})-X^{T_{m_{0}+1}}(T_{m_{0}+1})\|_H < \delta$, we have $X^{T_{m_{0}+1}}(T_{m_{0}+1})=\Pi \bar{X}(T_{m_{0}+1})\in A^{\epsilon+\frac{2\Delta}{3}}$. Now there are two possibilities: either $X^{T_{m_{0}+1}}(T_{m_{0}+1}) \in A^{\epsilon}$ or $X^{T_{m_{0}+1}}(T_{m_{0}+1}) \in A^{\epsilon+\frac{2\Delta}{3}}\backslash A^{\epsilon}$. In the former case, $X^{T_{m_{0}+1}}(t) \in A^{\epsilon}$ for $t \in I_{{m_0}+1}$, implying $X^{T_{m_{0}+1}}(T_{m_0+2}) \in A^{\epsilon}$. Since $\|\bar{X}(T_{m_{0}+2})-X^{T_{m_{0}+1}}(T_{m_{0}+2})\|_H < \delta$,  $\bar{X}(T_{m_{0}+2}) \in A^{\epsilon+\frac{\Delta}{3}}$. Otherwise, if $X^{T_{m_{0}+1}}(T_{m_{0}+1}) \in A^{\epsilon+\frac{2\Delta}{3}}\backslash A^{\epsilon}$, $X^{T_{m_{0}+1}}(t) \in A^{\epsilon + \frac{\Delta}{3}}$ on $I_{{m_0}+1}$ and there will be a drop of $\Delta$ in $V$ along the trajectory of ODE from $T_{m_{0}+1}$ to $T_{m_{0}+2}$. Therefore $X^{T_{m_{0}+1}}(T_{m_{0}+2}) \in A^{\epsilon - \frac{\Delta}{3}} \subset A^{\epsilon}$. This again implies that $\bar{X}(T_{m_{0}+2}) \in A^{\epsilon + \frac{\Delta}{3}}$. This argument can be repeated. Thus we can conclude that if for some $n_0$, $\Pi X(n_0) \in B$ and $\rho_k < \delta \; \forall \; k \geq 0$, then there exists some $m_0$ such that $X^{T_{m_{0}+k}}(t) \in A^{\epsilon+\frac{2\Delta}{3}}$ on $I_{m_0+k}$ for all $k\geq0$, implying that $\bar{X}(t) \in N^{(\delta)}(A^{\epsilon+\frac{2\Delta}{3}})$ for $t\geq T_{m_0}$, hence for $t\geq T_{0}+\tau$. 
Let $\mathcal{B}_k := \mathcal{B}_{-1}\cap\{\rho_m < \delta \ \forall \ 0 \leq m \leq k\}$, with $\mathcal{B}_{-1}:=  \{ \| X_{n_0} - \Pi X_{n_0} \| < \delta,  \ \Pi X_{n_0} \in B \}$.
Then by the foregoing,
\begin{align}\label{main-result}
    &P\left(\bar{X}(t)\in N^{(\delta)}(A^{\epsilon+\frac{2\Delta}{3}}) \ \forall\; t\geq T_0+\tau\Big|\mathcal{B}_{-1} \right)  \nonumber\\
    &\geq P\big(\rho_k<\delta \ \forall\; k\geq0| \mathcal{B}_{-1}\big)
\end{align}
 Thus we have:
\begin{lemma}
\begin{align}\label{prob-summation}
     &P\big(\rho_k<\delta \ \forall\; k\geq0|\mathcal{B}_{-1}\big) \nonumber\\
     &\geq 1- \sum_{k=0}^{\infty}P\big(\rho_k\geq\delta|\mathcal{B}_{k-1}\big).
\end{align}
\end{lemma}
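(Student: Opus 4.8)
The plan is to recognize the event in the left-hand side as a countable decreasing intersection and then reduce the claim to a union bound over conditional ``first-failure'' events. Since $\mathcal{B}_k = \mathcal{B}_{k-1}\cap\{\rho_k<\delta\}$ by definition, the sets form a nested chain $\mathcal{B}_{-1}\supseteq\mathcal{B}_0\supseteq\mathcal{B}_1\supseteq\cdots$, and $\{\rho_k<\delta \ \forall\, k\geq 0\}\cap\mathcal{B}_{-1} = \bigcap_{k\geq 0}\mathcal{B}_k =: \mathcal{B}_\infty$. Hence the quantity to be bounded is $P(\mathcal{B}_\infty\mid\mathcal{B}_{-1}) = 1 - P(\mathcal{B}_{-1}\setminus\mathcal{B}_\infty\mid\mathcal{B}_{-1})$, and it suffices to show $P(\mathcal{B}_{-1}\setminus\mathcal{B}_\infty\mid\mathcal{B}_{-1}) \leq \sum_{k\geq 0}P(\rho_k\geq\delta\mid\mathcal{B}_{k-1})$.

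First I would decompose $\mathcal{B}_{-1}$ along the index of first failure. Because the chain is nested, $\mathcal{B}_{-1} = \mathcal{B}_\infty \sqcup \bigsqcup_{k\geq 0}(\mathcal{B}_{k-1}\setminus\mathcal{B}_k)$ is a disjoint union, so $\mathcal{B}_{-1}\setminus\mathcal{B}_\infty = \bigsqcup_{k\geq 0}(\mathcal{B}_{k-1}\setminus\mathcal{B}_k)$. The key observation is the identity $\mathcal{B}_{k-1}\setminus\mathcal{B}_k = \mathcal{B}_{k-1}\cap\{\rho_k\geq\delta\}$, which follows directly from the recursive definition of $\mathcal{B}_k$. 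By countable additivity, $P(\mathcal{B}_{-1}\setminus\mathcal{B}_\infty) = \sum_{k\geq 0}P\big(\mathcal{B}_{k-1}\cap\{\rho_k\geq\delta\}\big) = \sum_{k\geq 0}P(\rho_k\geq\delta\mid\mathcal{B}_{k-1})\,P(\mathcal{B}_{k-1})$.

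Next I would divide through by $P(\mathcal{B}_{-1})$ and use monotonicity. Since $\mathcal{B}_{k-1}\subseteq\mathcal{B}_{-1}$, we have $P(\mathcal{B}_{k-1})/P(\mathcal{B}_{-1})\leq 1$, whence $P(\mathcal{B}_{-1}\setminus\mathcal{B}_\infty\mid\mathcal{B}_{-1}) = \sum_{k\geq 0}P(\rho_k\geq\delta\mid\mathcal{B}_{k-1})\,\tfrac{P(\mathcal{B}_{k-1})}{P(\mathcal{B}_{-1})} \leq \sum_{k\geq 0}P(\rho_k\geq\delta\mid\mathcal{B}_{k-1})$. Substituting into $P(\mathcal{B}_\infty\mid\mathcal{B}_{-1}) = 1 - P(\mathcal{B}_{-1}\setminus\mathcal{B}_\infty\mid\mathcal{B}_{-1})$ gives exactly \eqref{prob-summation}.

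There is no substantive obstacle here: this is a union bound dressed in conditional form, and the only care required is the bookkeeping of the disjoint first-exit decomposition and the elementary factorization of each joint probability into a conditional probability times a marginal bounded by $P(\mathcal{B}_{-1})$. The one point worth flagging is well-posedness of the conditional probabilities: one assumes $P(\mathcal{B}_{-1})>0$ (the inequality being vacuous otherwise) and reads $P(\rho_k\geq\delta\mid\mathcal{B}_{k-1})$ as $0$ whenever $P(\mathcal{B}_{k-1})=0$, which is consistent since the matching joint probability then also vanishes. The real content of the argument lies elsewhere, namely in bounding each term $P(\rho_k\geq\delta\mid\mathcal{B}_{k-1})$ via Lemma~\ref{bound-lemma} and the exponential tail \eqref{expbound}, which is taken up in the sequel.
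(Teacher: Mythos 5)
Your proof is correct. Note, however, that the paper itself does not spell out an argument at all: its ``proof'' is the citation to \cite{Borkar}, p.~25, and the standard route there is multiplicative. One uses the nestedness $\mathcal{B}_k=\mathcal{B}_{k-1}\cap\{\rho_k<\delta\}$ and the chain rule for conditional probabilities to write
$P\big(\rho_k<\delta\ \forall\,k\geq 0\,\big|\,\mathcal{B}_{-1}\big)=\prod_{k\geq 0}P\big(\rho_k<\delta\,\big|\,\mathcal{B}_{k-1}\big)=\prod_{k\geq 0}\bigl(1-P\big(\rho_k\geq\delta\,\big|\,\mathcal{B}_{k-1}\big)\bigr)$,
and then invokes the elementary inequality $\prod_{k}(1-a_k)\geq 1-\sum_{k}a_k$ for $a_k\in[0,1]$. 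Your argument instead disjointifies the failure event by the index of first failure, uses countable additivity, and bounds $P(\mathcal{B}_{k-1})/P(\mathcal{B}_{-1})\leq 1$; this is a genuinely different, equally elementary mechanism. The trade-offs are minor: your version is self-contained, avoids passing to the limit of finite products, and deals cleanly with null conditioning events (the chain-rule route needs $P(\mathcal{B}_k)>0$ for every $k$, or else the same zero-probability convention you flag), whereas the cited product form yields a slightly sharper intermediate bound $\prod_k(1-a_k)$ which the final statement discards anyway. Either way, \eqref{prob-summation} follows, and you are right that the analytic content of the paper lies in the subsequent estimation of each term $P\big(\rho_k\geq\delta\,\big|\,\mathcal{B}_{k-1}\big)$, not in this lemma.
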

This follows as in \cite{Borkar}, p.\ 25. 
Recall that $\upsilon_k := n_{k+1}-n_k$. If we select $n_0$ large enough so that 
\begin{equation}\label{condition-1}
 K_{T,k}<\frac{\delta}{2} \ \  \forall\;k\geq0,   
\end{equation}
then the inequality $\rho_{k}\geq\delta$ and Lemma \ref{bound-lemma} together imply that $\max_{\upsilon_k > l \geq0}\|\delta_{n_k,l}\|_H>\frac{\delta}{2K_T}$. Thus
$$
P\left(\rho_k\geq\delta|\mathcal{B}_{k-1}\right)\leq P\left(\max_{\upsilon_k> l\geq0}\|\delta_{n_k,l}\|_H>\frac{\delta}{2K_T}\Big|\mathcal{B}_{k-1}\right)
$$

The next  lemma establishes a useful bound.
\begin{lemma}
There exists a constant $K_3$ such that for any $t\in I_k$,
\begin{equation*}
\|\bar{X}(t)\|_2\leq K_3\left(1+\|\bar{X}(T_k)\|_2
\right).
\end{equation*}
\end{lemma}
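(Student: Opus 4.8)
The plan is to reduce the statement to a bound on the discrete iterates and then exploit the uniform bound on the \emph{number} of steps contained in each window $I_k$, which is precisely what the stepsize condition (\ref{stepbdd}) supplies, together with the pathwise linear growth of $h$ and of the noise. First I would pass from the interpolated trajectory to the underlying iterates: for $t \in [t(n),t(n+1)] \subset I_k$ the vector $\bar X(t)$ is a convex combination of $X(n)$ and $X(n+1)$, so by the triangle inequality $\|\bar X(t)\|_2 \le \max_{n_k \le n \le n_{k+1}} \|X(n)\|_2$. It therefore suffices to bound $\|X(n_k+m)\|_2$ for $0 \le m \le \upsilon_k$ in terms of $\|X(n_k)\|_2 = \|\bar X(T_k)\|_2$.

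Next I would derive a one-step affine recursion. Taking $\|\cdot\|_2$ in (\ref{matrix-eq}) and using that $\|P\|$ is a finite constant (indeed $\|P\|_H \le \alpha + \|\Pi\|_H$ from $P = Q + \Pi$ and Lemma \ref{contraction}), the linear growth $\|h(X)\|_2 \le K_1(1+\|X\|_2)$, the noise bound $\|\widetilde{\M}(n+1)\|_H \le K_2(1+\|X(n)\|)$, and the equivalence of $\|\cdot\|_H$ and $\|\cdot\|_2$ on $\Re^{M\times d}$, one obtains
\[
\|X(n+1)\|_2 \le \beta\,\|X(n)\|_2 + \gamma,
\]
where $\beta \ge 1$ and $\gamma > 0$ are constants absorbing $\|P\|$, $K_1$, $K_2$, the uniform stepsize bound $\sup_n a(n) < \infty$, and the norm-equivalence factors. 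Writing $y_m := \|X(n_k+m)\|_2$ and iterating $y_{m+1} \le \beta y_m + \gamma$ gives
\[
y_m \le \beta^{m} y_0 + \gamma\,\frac{\beta^{m}-1}{\beta-1}.
\]

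The decisive point is that $m$ ranges only over $0 \le m \le \upsilon_k$, and (\ref{stepbdd}) forces $\upsilon_k \le C^* T$ uniformly in $k$; hence $\beta^{\upsilon_k} \le \beta^{C^* T}$ is bounded by a constant independent of $k$. This collapses the estimate to
\[
y_m \le \beta^{C^* T}\Big(1 + \tfrac{\gamma}{\beta-1}\Big)\,(1 + y_0),
\]
so that, combined with the interpolation step above, the claim holds with $K_3 := \beta^{C^* T}\big(1 + \gamma/(\beta-1)\big)$.

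The main obstacle is the \emph{uniformity in $k$}: a naive iteration produces the amplification factor $\beta^{\upsilon_k}$, which could grow without bound if the number of iterations per time window were unbounded. This is exactly where assumption (\ref{stepbdd}) is indispensable — it is the structural hypothesis that keeps $\upsilon_k$, and therefore the amplification factor, uniformly bounded, making $K_3$ a genuine constant rather than a $k$-dependent quantity. Everything else is a routine combination of the linear-growth and norm-equivalence estimates already in force, and no probabilistic argument is needed, since the relevant noise bound is assumed to hold almost surely (pathwise).
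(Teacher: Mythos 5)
Your proof is correct, but it secures the uniform constant by a genuinely different mechanism than the paper. The paper also passes to the discrete recursion \eqref{matrix-eq} in the $\|\cdot\|_2$ norm, but it then uses the claim $\|P\|_2\le 1$ (attributed to stochasticity of $P$), so that its one-step bound reads $\|X(n+1)\|_2\le(1+K'a(n))\|X(n)\|_2+K'a(n)$ with $K'=K_1+K_2$: the per-step amplification is only $1+O(a(n))$, and iterating over a window gives the factor $\prod_i\bigl(1+K'a(n_k+i)\bigr)\le\exp\bigl(K'\sum_i a(n_k+i)\bigr)\le e^{K'T}$, using nothing more than the fact that the stepsizes within one window sum to at most $T$ by the very definition of $n_{k+1}$; condition \eqref{stepbdd} is never invoked. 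You instead absorb $\|P\|_2$ and the stepsize into a constant per-step factor $\beta\ge1$, accept the amplification $\beta^{\upsilon_k}$, and control it through \eqref{stepbdd}, which bounds the \emph{number} of iterates per window. Both routes yield a legitimate constant $K_3$, so your argument stands. The trade-offs: the paper's Gronwall-style estimate gives a sharper constant (roughly $e^{K'T}(1+K'T)$ versus your $\beta^{C^*T}\bigl(1+\gamma/(\beta-1)\bigr)$, which is exponential in $C^*T$ with a potentially large base) and does not consume the hypothesis \eqref{stepbdd}, which the paper needs only later, when counting martingale terms in the concentration bound. Conversely, your route is more robust on a delicate point: for a row-stochastic $P$ one only gets $\|P\|_\infty=1$ automatically, while the induced $2$-norm can exceed $1$ (it can be as large as $\sqrt{M}$; $\|P\|_2\le1$ is guaranteed, e.g., when $P$ is doubly stochastic), so the paper's step ``$\|P\|_2\le1$ because $P$ is stochastic'' requires an extra assumption or a change of norm, whereas your argument needs only finiteness of $\|P\|_2$ and therefore applies verbatim to any fixed stochastic $P$.
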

\begin{proof}
We have
$$
X(n_k+1) = PX(n_k) + a(n_k)\big(h(X(n_k))+\widetilde{\M}(n_k+1)\big).
$$
Taking $\|\cdot\|_2$ norm and using the linear growth property of the function $h$ and of the martingale difference noise,
\begin{align*}
    &\|X(n_k+1)\|_2 \leq \|P\|_2\|X(n_k)\|_2 \\
    &+ a(n_k)\big(K_1(1+\|X(n_k)\|_2)+K_2(1+\|X(n_k)\|_2)\big).
\end{align*}
But because $P$ is a stochastic matrix, $\|P\|_2\leq 1$, so
\begin{align*}
    \|X(n_k+1)\|_2 \leq \|X(n_k)\|_2(1+K^{'}a(n_k)) +a(n_k)K^{'},
\end{align*}
where $K^{'}:=K_1+K_2$.
Iterating this for $n_k\leq j\leq n_{m+1}$ and using the fact that $1+aK^{'}\leq \exp(aK')$ and the fact that if $j<n_{k+1}$, then $a(n_k)+a(n_k+1)+\dots+a(j-1)<t(n_{k+1})-t(n_{k})<T$, we get
\begin{align*}
    \|X(j)\|_2 \leq& \|X(n_k)\|_2\exp(K^{'}T) \ + \\
& \ \ K'T\exp(K^{'}T)\\
    \implies& \ \|X(j)\|_2 \leq K_3(1+\|X(n_k)\|_2)
\end{align*}
for some constant $K_3>0$.
\end{proof}

 Let $[\cdots]_{a,b}$ denote the $(a,b)$th term of the matrix `$\cdots$' and let $\Lambda(H) := \lambda_{max}(H)/\lambda_{min}(H)$. Then for any $G\in \Re^{M\times d}$, 
$$\|G\|_H\leq \sqrt{\Lambda(H)}\|G\|_F\leq \sqrt{\Lambda(H)Md}\max_{i,j}[G]$$ 
where $\| \cdot \|_F$ is the Frobenius norm. Thus we have
\begin{align}\label{probability-ref}
    &P\left(\max_{\upsilon_k> l\geq0}\|\delta_{n_k,l}\|_H>\frac{\delta}{2K_T}\Big|\mathcal{B}_{k-1}\right)\nonumber\\
    &\leq P\left(\max_{\upsilon_k> l\geq0} \max_{a,b}|[\delta_{n_k,l}]_{a,b}|>\frac{\delta}{2K_T\sqrt{\Lambda(H)Md}}\Big|\mathcal{B}_{k-1}\right)\nonumber\\
    &\leq \sum_{a,b} P\left(\max_{\upsilon_k> l\geq0}|[\delta_{n_k,l}]_{a,b}|>\frac{\delta}{2K_T\sqrt{\Lambda(H)Md}}\Big|\mathcal{B}_{k-1}\right).
\end{align}
But 
\begin{eqnarray*}
 \lefteqn{   |[\delta_{n_k,l}]_{a,b}| } \\
&=& \left|\sum_{i=0}^{l-1}a(n_k+i)\sum_{c=1}^{M}[P^{l-1-i}]_{a,c}[\M]_{c,b}(n_k+i+1)\right|\\
    &\leq& \sum_{c=1}^{M}\left|\sum_{i=0}^{l-1}a(n_k+i)[P^{l-1-i}]_{a,c}[\M]_{c,b}(n_k+i+1)\right|\\
    &\leq& M\max_{1\leq c\leq M}\Bigg|\sum_{i=0}^{l-1}a(n_k+i)[P^{l-1-i}]_{a,c}\times \\
&& \ \ \ \ \ \ \ \ \ \ [\M]_{c,b}(n_k+i+1)\Bigg|.
\end{eqnarray*}
 
Combining this with \eqref{probability-ref},
\begin{eqnarray*}
\lefteqn{P\left(\max_{\upsilon_k> l\geq0}\|\delta_{n_k,l}\|_H>\frac{\delta}{2K_T} \ \Big| \ \mathcal{B}_{k-1}\right) \ \leq} \\
&& \sum_{a,b,c} \sum_{i=0}^{\upsilon_k-1}P\Bigg(\Big|\sum_{i=0}^{l-1}a(n_k+i)[P^{l-1-i}]_{a,c}\M_{c,b}(n_k+i+1)\Big|\\
&& \ \ \ \ \ \ \ \ \  > \ \frac{\delta}{2K_T\sqrt{\Lambda(H)M^3d}} \ \Big| \ \mathcal{B}_{k-1}\Bigg).
\end{eqnarray*}
Let $S_m:=\sum_{i=0}^{m-1}a(n_k+i)[P^{l-1-i}]_{a,c}[\M]_{c,b}(n_k+i+1)$. Then $\{(S_m, \mathcal{F}_{n_k+m}), 0 \leq m \leq l\}$ is a martingale  and $\mathcal{B}_{n_k+m}\in \mathcal{F}_{n_k+m}.$ Let $Y_1 = S_1$ and $Y_m=S_m-S_{m-1}\;\forall\; l\geq m>1$. Then
\begin{align*}
    |Y_i| &= |a(n_k+i)[P^{l-1-i}]_{a,c}[\M]_{c,b}(n_k+i+1)| \\
    &\leq a(n_k+i)|[\M]_{c,b}(n_k+i+1)| \\
&\leq a(n_k+i)\|\M(n_k+i+1)\|_{\infty}\\
    &\leq da(n_k+i)\|\M(n_k+i+1)\|_2\\
    &\leq da(n_k+i)K_2(1+\|X(n_k+i+1)\|_2)\\
    &\leq da(n_k+i)K_2(1+K_3(1+\|\bar{X}(T_k)\|_2)) \\
    &\leq K_4da(n_k+i),
\end{align*}
 where $K_4:= K_2(1+K_3(1+K_5))$ for $K_5 = sup_{X\in \hat{B}}\|X\|_2$.
 
Now we use the conditional version of the inequality for martingale arrays from the Appendix with $A_{m,l} = a(m)p_{a,c}^{l - m - 1}, \gamma_1 = T$ and $\gamma_2 =c, w(m) = a(n_k)$ for $0\leq m \leq l$. Recall that
$$\tilde{\delta} := \frac{\delta}{2K_T\sqrt{\Lambda(H)M^3d}}.$$
Then for a constant $D > 0$ depending on $\kappa, C, T, c$, 
\begin{eqnarray*}
 &&\sum_{a,b,c,l}P\Bigg(\left|\sum_{i=0}^{l-1}a(n_k+i)[P^{l-1-i}]_{a,c}[\M]_{c,b}(n_k+i+1) \ \right| \\
&& \ \ \ \ \ \ \ \ \ \ \ \ \ \ \ >  \tilde{\delta} \ \Big| \ \mathcal{B}_{k-1}\Bigg)\\
&&\leq 2M^2d(n_{k+1}-n_k)e^{-\frac{D\tilde{\delta}^2}{a(n_k)}} \ \ \mbox{if} \ \tilde{\delta} \in \Big(0,\frac{CT}{\kappa}\Big], \\
&& \ \ \  2M^2d(n_{k+1}-n_k)e^{-\frac{D\tilde{\delta}}{a(n_k)}} \ \ \mbox{otherwise} \\
&&\leq 2M^2dC^*n_ke^{-\frac{D\tilde{\delta}^2}{a(n_k)}} \ \ \mbox{if} \ \tilde{\delta} \in \Big(0,\frac{CT}{\kappa}\Big], \\
&& \ \ \  2M^2dC^*n_ke^{-\frac{D\tilde{\delta}}{a(n_k)}} \ \ \mbox{otherwise}, 
\end{eqnarray*}
where the summations over $a,b,c,l$ are over their respective ranges and $C^*$ is as in (\ref{stepbdd}).

Combining this with \eqref{main-result} and \eqref{prob-summation} we get,
\begin{align}\label{pre-main}
    &P\Big(\bar{X}(t)\in N^{(\delta)}\left(A^{\epsilon+\frac{2\Delta}{3}}\right) \;\forall\; t\geq T_0+\tau\Big|\nonumber \\
& \ \ \ \ \ \ \ \ \ \ \ \ \|X_{n_0}  - \Pi X_{n_0}\| < \delta, \Pi X_{n_0}\in B \Big)  \nonumber\\
    &\geq 1- 2M^2dC^*\sum_{n\geq n_0}ne^{-\frac{D\tilde{\delta}^2}{a(n)}} \ \ \mbox{if} \ \tilde{\delta} \in \Big(0,\frac{CT}{\kappa}\Big] \\
& \ \ \ \  1 -  2M^2dC^*\sum_{n\geq n_0}ne^{-\frac{D\tilde{\delta}}{a(n)}} \ \ \mbox{otherwise}. 
\end{align}
This completes the proof of the main result.

\section{Future directions}

This is only a first step towards a  finer analysis of this class of algorithms. In fact there have been other variants of the basic scheme (\ref{single-eq}), one of the important strands being the case when instead of a weighted average of the iterate over all neighbors, each processor polls one of the neighbors with a prescribed probability and takes an average of its own iterate with that of the selected neighbor, see, e.g., \cite{Nedic}. The development above does not carry over to this case automatically and will have to now include analysis of a product of random stochastic matrices.

A further extension of the basic paradigm itself has been to replace the averaging operator by a nonlinear map so that asymptotically, the ODE limit of the stochastic approximation gets confined to the set of its fixed points \cite{Mathkar}. This idea has been used for distributed projected stochastic approximation where the projection is also enforced by a distributed scheme \cite{Shah}. It will be challenging and useful to extend the foregoing analysis to cover such scenarios.

Finally, there is always scope to tighten the bounds above, as we have been rather generous with them at various points.\\

\noindent \textbf{APPENDIX}\\

Let $\{\M_n\}$ be a real valued martingale difference sequence with respect to an increasing family of $\sigma$-fields $\{\mathcal{F}_n\}$. Assume that there exist $\varepsilon, C > 0$ such that
$$E\left[e^{\varepsilon |\M_n|}\Big|\mathcal{F}_{n-1}\right] \leq C \ \ \forall \; n \geq 1, \mbox{a.s.}$$
Let $S_n := \sum_{m=1}^n\xi_{m,n}\M_m$, where $\xi_{m,n}, \ m \leq n,$, for each $n$, are a.s.\ bounded $\{\mathcal{F}_n\}$-previsible random variables, i.e., $\xi_{m,n}$ is $\mathcal{F}_{m-1}$-measurable $\forall \; m \geq 1$, and $|\xi_{m,n}| \leq A_{m,n}$ a.s.\ for some constant $A_{m,n}$, $\forall \; m, n$. Suppose
$$\sum_{m=1}^nA_{m,n} \leq \gamma_1, \ \max_{1\leq m \leq n}A_{m,n} \leq \gamma_2\omega(n),$$
for some $\gamma_i, \omega(n) > 0, \ i = 1,2; n \geq 1$. Then we have:

\begin{theorem}\label{thm-appendix} There exists a constant $D > 0$ depending on $\varepsilon, C, \gamma_1, \gamma_2$ such that for $\epsilon > 0$,
\begin{eqnarray}
P\left(|S_n| > \epsilon\right) &\leq& 2e^{-\frac{D\epsilon^2}{\omega(n)}}, \ \ \mbox{if} \ \epsilon \in \left(0, \frac{C\gamma_1}{\varepsilon}\right], \label{LW1} \\
&&  2e^{-\frac{D\epsilon}{\omega(n)}},  \ \ \mbox{otherwise.} \label{LW2}
\end{eqnarray}
\end{theorem}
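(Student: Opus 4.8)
The plan is to run the standard exponential (Chernoff--Cram\'er) argument for a martingale, tuned so that the exponential moment hypothesis on $\M_m$ is only exploited over the range of $\theta$ where it yields a clean Gaussian-type bound; the breakdown of that range is exactly what produces the two regimes in \eqref{LW1}--\eqref{LW2}. For fixed $n$, set $T_k := \sum_{m=1}^k \xi_{m,n}\M_m$ for $k \le n$, so that $T_n = S_n$. Since $\xi_{m,n}$ is $\mathcal{F}_{m-1}$-measurable and $E[\M_m \mid \mathcal{F}_{m-1}] = 0$, the sequence $(T_k, \mathcal{F}_k)$ is a martingale. First I would write, for $\theta > 0$, the Markov bound $P(S_n > \epsilon) \le e^{-\theta\epsilon}\,E[e^{\theta S_n}]$ and control the moment generating function by peeling off one term at a time via the tower property.

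The key single-step estimate is a bound of the form $E[e^{\theta \xi_{m,n}\M_m}\mid\mathcal{F}_{m-1}] \le \exp(c_0\,\theta^2 A_{m,n}^2)$, valid provided $\theta$ is small enough. To obtain it I would expand the exponential in a power series: the linear term vanishes because $\M_m$ is conditionally mean zero, and the hypothesis $E[e^{\varepsilon|\M_m|}\mid\mathcal{F}_{m-1}] \le C$ gives the conditional moment bounds $E[|\M_m|^k\mid\mathcal{F}_{m-1}] \le C\,k!\,\varepsilon^{-k}$ through $x^k/k! \le e^x$. Summing the resulting geometric series (which converges once $\theta A_{m,n} < \varepsilon$) and restricting to $\theta A_{m,n} \le \varepsilon/2$ yields the quadratic bound with an explicit constant $c_0$ proportional to $C/\varepsilon^2$. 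Iterating the tower property then gives $E[e^{\theta S_n}] \le \exp\!\big(c_0\,\theta^2\sum_{m=1}^n A_{m,n}^2\big)$, and the hypotheses bound $\sum_m A_{m,n}^2 \le (\max_m A_{m,n})\sum_m A_{m,n} \le \gamma_1\gamma_2\,\omega(n)$, so that $E[e^{\theta S_n}] \le \exp(\sigma^2\theta^2)$ with $\sigma^2 := c_0\,\gamma_1\gamma_2\,\omega(n)$, valid for all $\theta \in (0,\theta_{\max}]$ where $\theta_{\max} := \varepsilon/(2\gamma_2\,\omega(n))$.

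It remains to optimize $-\theta\epsilon + \sigma^2\theta^2$ over the admissible range. The unconstrained minimizer is $\theta^\star = \epsilon/(2\sigma^2)$, and a short computation shows $\theta^\star \le \theta_{\max}$ precisely when $\epsilon$ lies below a threshold of order $C\gamma_1/\varepsilon$; in that case plugging in $\theta^\star$ gives the sub-Gaussian bound $e^{-\epsilon^2/(4\sigma^2)} = e^{-D\epsilon^2/\omega(n)}$, which is \eqref{LW1}. When $\epsilon$ exceeds the threshold I would instead take the boundary value $\theta = \theta_{\max}$: since the optimum lies outside the range the objective is decreasing there, and once $\epsilon$ is past the threshold the linear term $-\theta_{\max}\epsilon$ dominates the quadratic one, leaving a bound of the form $e^{-D\epsilon/\omega(n)}$, which is \eqref{LW2}. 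Applying the same argument to $-S_n$ handles the lower tail and supplies the factor $2$; tracking $D$ through the two regimes (and absorbing constant factors to match the stated crossover $C\gamma_1/\varepsilon$) shows it depends only on $\varepsilon, C, \gamma_1, \gamma_2$.

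The main obstacle I anticipate is the single-step MGF estimate and, in particular, the fact that it holds only for $\theta \le \theta_{\max}$: the exponential moment hypothesis controls $|\M_m|$ only at rate $\varepsilon$, so the Gaussian-type bound cannot persist for large $\theta$, and it is exactly this ceiling on $\theta$ that forces the switch from the $\epsilon^2$ to the $\epsilon$ exponent. Getting the constants and the crossover to line up cleanly with $C\gamma_1/\varepsilon$ is the one delicate point; the remaining steps are routine bookkeeping.
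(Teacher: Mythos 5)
Your proof is correct and is essentially the same argument as the one the paper relies on: the paper gives no inline proof of this theorem, deferring to Liu--Watbled (Theorem 1.1) and Thoppe--Borkar (Theorem A.1), and those proofs run exactly your constrained Chernoff--Cram\'er scheme --- a single-step conditional MGF bound $E[e^{\theta\xi_{m,n}\M_m}\mid\mathcal{F}_{m-1}]\le e^{c_0\theta^2A_{m,n}^2}$ (via the moment bounds $E[|\M_m|^k\mid\mathcal{F}_{m-1}]\le Ck!\varepsilon^{-k}$) valid only up to a ceiling $\theta_{\max}$ of order $\varepsilon/(\gamma_2\omega(n))$, followed by optimization that is interior in the sub-Gaussian regime and pinned at the boundary in the sub-exponential regime. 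The only loose end is that your natural crossover lands at $2C\gamma_1/\varepsilon$ rather than the stated $C\gamma_1/\varepsilon$, but, as you observe, on the intermediate interval the Gaussian bound dominates a linear one after shrinking $D$, so the statement as written follows.
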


This is a variant of Theorem 1.1 of \cite{Liu}. See \cite{Gugan}, Theorem A.1, pp.\ 21-23, for details.

\vspace{12pt}

\end{document}